\def\X{{\mathcal{X}}}
\def\Y{{\mathcal{Y}}}
\def\Z{{\mathcal{Z}}}
\def\W{{\mathcal{W}}}
\def\Simp{{\mathcal{P}}}
\def\E{{\mathbb{E}}}
\def\R{{\mathbb{R}}}
\def\1{{\textbf{1}}}
\def\0{{\textbf{0}}}
\DeclareMathOperator{\tr}{tr}
\DeclareMathOperator{\vectorize}{vec}
\DeclareMathOperator*{\argmax}{arg\,max}
\newcommand{\optspace}{\,\,\,}
\newcommand{\norm}[1]{\left\lVert#1\right\rVert}
\newcommand*\hexbrace[2]{%
  \underset{#2}{\underbrace{\rule{#1}{0pt}}}}
\newtheorem{proposition}{Proposition}
\newtheorem{definition}{Definition}
\begin{document}

\title{\vspace{0.25in}Probabilistic Clustering using\\Maximal Matrix Norm Couplings}

\author{\IEEEauthorblockN{David Qiu, Anuran Makur, and Lizhong Zheng}
\IEEEauthorblockA{EECS Department, Massachusetts Institute of Technology\\
Email: $\left\{\text{davidq, a\_makur, lizhong}\right\}$@mit.edu}}

\maketitle

\thispagestyle{plain}
\pagestyle{plain}

\begin{abstract}
In this paper, we present a local information theoretic approach to explicitly learn probabilistic clustering of a discrete random variable. Our formulation yields a convex maximization problem for which it is NP-hard to find the global optimum. In order to algorithmically solve this optimization problem, we propose two relaxations that are solved via gradient ascent and alternating maximization. Experiments on the MSR Sentence Completion Challenge, MovieLens 100K, and Reuters21578 datasets demonstrate that our approach is competitive with existing techniques and worthy of further investigation.
\end{abstract}

\section{Introduction}
\label{Introduction}

Clustering is one of many important techniques in unsupervised learning that finds structure in unlabeled data. One important class of clustering algorithms is metric based, where each row of the data matrix corresponds an item's vector representation in $\R^n$. The most well known example of metric based clustering is $k$-means clustering (or Lloyd-Max algorithm \cite{Lloyd1982, Max1960}).

In this paper, we instead focus on probabilistic clustering, where the data matrix is usually viewed as the joint co-occurrences (or affinities) between two discrete sets, $\X$ and $\Y$, of items and users, respectively. The co-occurrence matrix can be normalized to sum to $1$ to represent a joint probability matrix. Much like \cite{DhillonMallelaModha2003}, we want to maximize the ``cluster-to-item'' mutual information over the set of ``user-to-cluster'' assignment matrices. Our main contributions include relaxing this mutual information optimization into a Frobenius norm optimization over ``DTM'' matrices (to be defined later), relating such matrices to graph Laplacians in spectral graph theory, and proposing an alternating maximization algorithm to approximately solve this matrix optimization. Moreover, unlike spectral methods, we directly learn a transition kernel for soft clustering as opposed to following the usual two-step procedure of learning an embedding and then applying $k$-means clustering.


\subsection{Outline}
\label{Outline}

This paper is organized as follows: Section \ref{Background} defines the divergence transition matrix and derives the relationship between its Frobenius norm and mutual information. Section \ref{Maximal Frobenius Norm Coupling} discusses the Frobenius maximization problem for probabilistic clustering and analyzes its convexity and complexity. Section \ref{Optimization Algorithms} relaxes the optimization problem and presents two algorithms based on gradient ascent and alternating maximization, respectively. Section \ref{Experiments} presents some experimental results that validate our model.

\section{Background}
\label{Background}

\subsection{Notation}
\label{Notation}

We let $\X$, $\Y$, and $\Z$ denote the non-empty, finite alphabet sets corresponding to the random variables $X$, $Y$, and $Z$, respectively. For a set $\X$, we let $\Simp_{\X} \subseteq \R^{|\X|}$ denote the probability simplex of probability mass functions (pmfs) on $\X$, and $\Simp_{\X}^{\circ}$ denote the relative interior of $\Simp_{\X}$. Furthermore, for any two sets $\X$ and $\Y$, we let $\Simp_{\Y|\X} \subseteq \R^{|\Y| \times |\X|}$ denote the set of all column stochastic matrices (channels or transition probability kernels) from $\X$ to $\Y$. For convenience, we perceive joint pmfs of any two random variables as matrices, e.g. $\Simp_{\Y \times \X} \subseteq \R^{|\Y| \times |\X|}$, and for any (marginal) pmf $P_X \in \Simp_{\X}$, we let $[P_X] \in \R^{|\X| \times |\X|}$ denote the diagonal matrix with $P_X$ along the principal diagonal. 

For any $m \times n$ real matrix $A \in \R^{m \times n}$, we let $\sigma_1(A) \geq \sigma_2(A) \geq \dots \geq \sigma_{\min(m,n)}(A)$ denote the ordered \textit{singular values} of $A$, and $\tr(A)$ denote the \textit{trace} of $A$. Furthermore, we will use the notation:
\begin{equation}
\norm{A}_{p} \triangleq \left(\sum_{i = 1}^{\min(m,n)}{\sigma_i(A)^p}\right)^{\! \frac{1}{p}}
\end{equation}
to represent the \textit{Schatten $\ell^p$-norm} of $A$ with $1 \leq p \leq \infty$. Two pertinent specializations of the Schatten $\ell^p$-norm are:
\begin{align}
\label{Eq: Nuclear Norm Definition}
\norm{A}_{*} & \triangleq \norm{A}_{1} = \tr\!\left(\left(A^T A\right)^{\frac{1}{2}}\right) \\
\norm{A}_{F} & \triangleq \norm{A}_{2} = \tr\!\left(A^T A\right)^{\frac{1}{2}}
\end{align}
which denote the \textit{nuclear norm} and \textit{Frobenius norm} of $A$, respectively. (Note that in \eqref{Eq: Nuclear Norm Definition}, $(A^T A)^{1/2}$ is the unique positive semidefinite square root matrix of $A^T A$.) Finally, we will use $A \geq 0$ to imply that $A$ is entry-wise non-negative. 

For any two vectors $x,y \in \R^n$, we let $\sqrt{x}$ denote the entry-wise square root of $x$, $\norm{x}_2$ denote the Euclidean $\ell^2$-norm of $x$, and $x y \in \R^n$ denote the (entry-wise) Hadamard product of $x$ and $y$.

\subsection{Information Theoretic Motivation}
\label{Information Theoretic Motivation}

Suppose we are given training data $(Y_1,X_1),\dots,(Y_n,X_n)$ that is drawn i.i.d. from a joint pmf $P_{Y,X} \in \Simp_{\Y \times \X}$ such that $P_Y \in \Simp_{\Y}^{\circ}$ and $P_X \in \Simp_{\X}^{\circ}$. Our goal is to perform \textit{clustering} on $\Y$ by learning the transition probability kernel $P_{Z|Y} \in \Simp_{\Z|\Y}$, where $\Z$ is the set of cluster labels with $|\Z| \ll |\Y|$, and $P_{Z|Y = y} \in \Simp_{\Z}$ represents a soft assignment of $y \in \Y$. Since our training data is ``unlabeled,'' we assume that $X \rightarrow Y \rightarrow Z$ form a Markov chain to extract information about the clusters from our training data. From hereon, we assume that $P_{Y,X}$ is known as it can be empirically estimated from the data, and $P_Z \in \Simp_{\Z}^{\circ}$ is known from some prior domain knowledge. For example, when clustering readers of political blogs, $\X$ is the set of blogs, $\Y$ is the set of readers, and $P_Z$ can be set using priors on the distribution of liberals and conservatives in the country. 

The following information theoretic problem can be used to perform probabilistic clustering:
\begin{equation}
\label{Eq: Information Theoretic Formulation}
\sup_{P_{Z|Y} \in \Simp_{\Z|\Y}: \, P_{Z|Y}P_Y = P_Z} I(X;Z)
\end{equation}
where $P_{X,Y}$ and $P_Z$ are fixed, $X \rightarrow Y \rightarrow Z$ form a Markov chain, and $I(X;Z)$ denotes the mutual information between $X$ and $Z$ (see \cite[Section 2.3]{CoverThomas2006} for a definition). In the sections that follow, we will refer to $P_{Z|Y}P_Y = P_Z$ as the \textit{constraint on the marginal}. Intuitively, the formulation in \eqref{Eq: Information Theoretic Formulation} finds soft clusters by maximizing $I(X;Z)$ and thereby exploiting the information that $X$ contains about $Y$. Note that $I(X;Z) \leq I(X;Y)$ by the data processing inequality \cite[Section 2.8]{CoverThomas2006}, but $P_{Z|Y} = I_{|\Z|}$ (which denotes the $|\Z| \times |\Z|$ identity matrix) is not a solution because $|\Z| \ll |\Y|$.

It is worth mentioning that the formulation in \eqref{Eq: Information Theoretic Formulation} is related to the \textit{information bottleneck method} developed in \cite{TishbyPereiraBialek1999} (which is useful for lossy source compression and clustering), as well as the \textit{linear information coupling problem} introduced in \cite{HuangZheng2012} (which provides intuition about network information theory problems). 

\subsection{Local Approximations}
\label{Local Approximations}

Since the mutual information objective in the probabilistic clustering formulation in \eqref{Eq: Information Theoretic Formulation} has no inherent operational meaning, we will use local approximations, much like \cite{HuangZheng2012}, to transform \eqref{Eq: Information Theoretic Formulation} into a simpler Frobenius norm maximization problem (which is a non-convex quadratic program as shown in section \ref{Maximal Frobenius Norm Coupling}). To this end, for a fixed reference pmf $P_Z \in \Simp_{\Z}^{\circ}$, we define a \textit{locally perturbed pmf} $Q_Z \in \Simp_{\Z}$ of $P_Z$ as follows:
\begin{equation}
Q_Z = P_Z + \epsilon \sqrt{P_Z} \phi
\end{equation}
where $\phi \in \R^{|\Y|}$ is a \textit{spherical perturbation vector} such that $\phi^T \sqrt{P_Z} = 0$ \cite[Equation (14)]{MakurZheng2018}, and $\epsilon \neq 0$ is a scalar that is small enough to ensure that $Q_{Z} \in \Simp_{\Z}$. For such perturbed pmfs $Q_Z$, we can locally approximate the Kullback-Leibler (KL) divergence between $Q_Z$ and $P_Z$ as a scaled Euclidean $\ell^2$-norm of $\phi$. Indeed, as shown in \cite{HuangZheng2012}, a straightforward calculation using Taylor's theorem yields:
\begin{equation}
\label{Eq: Local Approximation of KL Divergence}
D(Q_Z||P_Z) = \frac{1}{2}\epsilon^2 \norm{\phi}_2^2 + o\!\left(\epsilon^2\right)
\end{equation}
where $D(\cdot||\cdot)$ denotes KL divergence (see \cite[Section 2.3]{CoverThomas2006} for a definition), and $o(\epsilon^2)$ represents a function satisfying $\lim_{\epsilon \rightarrow 0}{o(\epsilon^2)/\epsilon^2} = 0$. 

Now consider the following local perturbation relations that we will use to locally approximate \eqref{Eq: Information Theoretic Formulation}:
\begin{equation}
\label{Eq: Soft Clustering Local Approximation}
\forall y \in \Y, \enspace P_{Z|Y = y} = P_Z + \epsilon \sqrt{P_Z} \phi_y 
\end{equation}
where $\{\phi_y \in \R^{|\Z|} : y \in \Y, \, \phi_y^T \sqrt{P_Z} = 0, \, \norm{\phi_y}_2 = 1\}$ are unit norm spherical perturbation vectors, and $\epsilon \neq 0$ is small enough to ensure that $P_{Z|Y = y} \in \Simp_{\Z}$ for every $y \in \Y$. 
Due to the Markov relation $X \rightarrow Y \rightarrow Z$, the conditions in \eqref{Eq: Soft Clustering Local Approximation} imply after some straightforward computation that:
\begin{equation}
\label{Eq: Local Approximation Implication}
\forall x \in \X, \enspace P_{Z|X = x} = P_Z + \epsilon \sqrt{P_Z} \psi_x
\end{equation}
where the spherical perturbation vectors $\{\psi_x \in \R^{|\Z|} : x \in \X, \, \psi_x^T \sqrt{P_Z} = 0\}$ are given by:
\begin{equation}
\forall x \in \X, \forall z \in \Z, \enspace \psi_x(z) = \sum_{y \in \Y}{P_{Y|X}(y|x) \phi_y(z)} .
\end{equation}
To succinctly describe the local approximation of the objective function of \eqref{Eq: Information Theoretic Formulation} that stems from \eqref{Eq: Local Approximation Implication}, we introduce the so called \textit{divergence transition matrices}.

\begin{definition}[Divergence Transition Matrix \cite{HuangZheng2012}]
\label{Def: DTM}
Given a joint pmf $P_{Y,X} \in \Simp_{\Y \times \X}$, with conditional pmfs $P_{Y|X} \in \Simp_{\Y|\X}$ and marginal pmfs satisfying $P_X \in \Simp_{\X}^{\circ}$ and $P_Y \in \Simp_{\Y}^{\circ}$, the divergence transition matrix (DTM) of $P_{Y,X}$ is defined as:
\begin{align}
B_{Y,X} = B(P_{Y,X}) & \triangleq [P_Y]^{-\frac{1}{2}} P_{Y,X} [P_X]^{-\frac{1}{2}} \label{Eq: DTM Definition} \\
& = [P_Y]^{-\frac{1}{2}} P_{Y|X} [P_X]^{\frac{1}{2}} \label{Eq: DTM Definition Conditional}. 
\end{align}
\end{definition}

It is well-known that the largest singular value of $B_{Y,X}$ is $\sigma_1(B_{Y,X}) = 1$ with corresponding right and left singular vectors $\sqrt{P_X}$ and $\sqrt{P_Y}$, respectively (see e.g. \cite{HuangZheng2012}, \cite[Appendix A]{MakurZheng2018}):
\begin{equation}
\label{Eq: DTM Largest Singular Value}
\begin{aligned}
B_{Y,X} \sqrt{P_X} & = \sigma_1(B_{Y,X}) \sqrt{P_Y} = 1 \sqrt{P_Y} , \\
B_{Y,X}^T \sqrt{P_Y} & = \sigma_1(B_{Y,X}) \sqrt{P_X} = 1 \sqrt{P_X} .
\end{aligned}
\end{equation}
Moreover, the next proposition decomposes the DTM of random variables in a Markov chain.

\begin{proposition}[Composed DTM]
	\label{Prop: Composed DTM}
	If $X \rightarrow Y \rightarrow Z$ form a Markov chain, then $B_{Z,X} = B_{Z,Y}B_{Y,X}$.
\end{proposition}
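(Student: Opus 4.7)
The plan is to work directly from the definition \eqref{Eq: DTM Definition Conditional}, which expresses each DTM as a sandwich of the conditional pmf matrix between diagonal scalings of the square-rooted marginals. The advantage of this form (over \eqref{Eq: DTM Definition}) is that the inner factor is a transition kernel, so we can invoke the Markov property in its familiar matrix form.

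The first step is to invoke the Markov relation $X \rightarrow Y \rightarrow Z$ to write $P_{Z|X} = P_{Z|Y} P_{Y|X}$; this is just the identity $P_{Z|X}(z|x) = \sum_{y \in \Y} P_{Z|Y}(z|y) P_{Y|X}(y|x)$ written as a matrix product, and it requires only the conditional independence $Z \indep X \mid Y$ plus the assumption $P_Y \in \Simp_{\Y}^{\circ}$ (so the conditionals are well-defined). The second step is to write out
\begin{equation*}
B_{Z,Y} B_{Y,X} = [P_Z]^{-\frac{1}{2}} P_{Z|Y} [P_Y]^{\frac{1}{2}} [P_Y]^{-\frac{1}{2}} P_{Y|X} [P_X]^{\frac{1}{2}} ,
\end{equation*}
observe that $[P_Y]^{1/2} [P_Y]^{-1/2} = I_{|\Y|}$ (again using $P_Y \in \Simp_{\Y}^{\circ}$ so the inverse exists), collapse the middle, and substitute the Markov identity to obtain $[P_Z]^{-1/2} P_{Z|X} [P_X]^{1/2} = B_{Z,X}$.

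There is essentially no obstacle here; the proposition is a one-line algebraic cancellation once the conditional form \eqref{Eq: DTM Definition Conditional} is used. The only point worth flagging in the write-up is the role of the positivity assumptions $P_X, P_Y, P_Z \in \Simp^{\circ}$ that make the diagonal inverses $[P_Y]^{-1/2}$ etc.\ legitimate, so that the $[P_Y]^{1/2}[P_Y]^{-1/2}$ cancellation in the middle is actually valid. If one instead started from the joint form \eqref{Eq: DTM Definition}, an extra step would be needed to rewrite $P_{Z,X} = P_{Z|Y} P_{Y,X}$ via the Markov chain, so choosing the conditional form up front is the single small design decision that makes the proof trivial.
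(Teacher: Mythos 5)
Your proposal is correct and is essentially the paper's own proof run in reverse: the paper starts from $B_{Z,X} = [P_Z]^{-\frac{1}{2}} P_{Z|X} [P_X]^{\frac{1}{2}}$, applies the Markov factorization $P_{Z|X} = P_{Z|Y} P_{Y|X}$, and inserts $[P_Y]^{\frac{1}{2}}[P_Y]^{-\frac{1}{2}}$ in the middle, whereas you start from the product $B_{Z,Y}B_{Y,X}$ and cancel that same factor. The only (harmless) addition on your side is the explicit remark about the positivity assumptions justifying the diagonal inverses, which the paper leaves implicit.
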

\begin{proof}
	Observe using Definition \ref{Def: DTM} that:
	\begin{align*}
	B_{Z,X} &= [P_Z]^{-\frac{1}{2}}P_{Z|X} [P_X]^{\frac{1}{2}} \\
	&= [P_Z]^{-\frac{1}{2}}P_{Z|Y}P_{Y|X}[P_X]^{\frac{1}{2}} \\
	&= \underbrace{[P_Z]^{-\frac{1}{2}}P_{Z|Y}[P_Y]^{\frac{1}{2}}}_{B_{Z,Y}} \underbrace{[P_Y]^{-\frac{1}{2}}P_{Y|X}[P_X]^{\frac{1}{2}}}_{B_{Y,X}}
	\end{align*}
	where the second equality uses the Markov property.
\end{proof}

Finally, we locally approximate $I(X;Z)$ using \eqref{Eq: Local Approximation Implication}.

\begin{proposition}[Local Approximation of Mutual Information]
\label{Prop: Local Mutual Information}
Under the local perturbation conditions in \eqref{Eq: Local Approximation Implication}, we have:
	\[ I(X;Z) = \frac{1}{2} \left(\norm{B_{Z,X}}_{F}^{2} - 1\right) + o\!\left(\epsilon^2\right) . \]
\end{proposition}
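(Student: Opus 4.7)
The plan is to unravel the mutual information as an average KL divergence, apply the local-perturbation approximation from \eqref{Eq: Local Approximation of KL Divergence}, and then recognize the resulting quantity as a direct entry-wise expansion of $\norm{B_{Z,X}}_F^2$.

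First I would write $I(X;Z) = \sum_{x \in \X} P_X(x)\, D(P_{Z|X=x} \,\|\, P_Z)$ and plug in the local-perturbation identity $P_{Z|X=x} = P_Z + \epsilon \sqrt{P_Z}\,\psi_x$ from \eqref{Eq: Local Approximation Implication}, where by construction $\psi_x^T\sqrt{P_Z}=0$. Equation \eqref{Eq: Local Approximation of KL Divergence} then applies term-by-term to yield $D(P_{Z|X=x}\,\|\,P_Z) = \tfrac{1}{2}\epsilon^2 \norm{\psi_x}_2^2 + o(\epsilon^2)$. Because $\X$ is finite, the residual terms aggregate cleanly, giving $I(X;Z) = \tfrac{\epsilon^2}{2}\sum_x P_X(x)\norm{\psi_x}_2^2 + o(\epsilon^2)$.

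Next I would compute $\norm{B_{Z,X}}_F^2$ directly entry-by-entry. From \eqref{Eq: DTM Definition Conditional}, $B_{Z,X}(z,x) = P_{Z|X}(z|x)\sqrt{P_X(x)}/\sqrt{P_Z(z)}$, so inserting \eqref{Eq: Local Approximation Implication} gives the clean decomposition $B_{Z,X}(z,x) = \sqrt{P_Z(z)P_X(x)} + \epsilon\sqrt{P_X(x)}\,\psi_x(z)$, which is consistent with \eqref{Eq: DTM Largest Singular Value}. Squaring and summing over $(z,x)$, the leading term contributes $\sum_{z,x}P_Z(z)P_X(x)=1$; the $O(\epsilon)$ cross term $2\epsilon\sum_x P_X(x)\sum_z \sqrt{P_Z(z)}\,\psi_x(z)$ vanishes exactly by the spherical constraint $\psi_x^T\sqrt{P_Z}=0$; and the $\epsilon^2$ term equals $\epsilon^2\sum_x P_X(x)\norm{\psi_x}_2^2$. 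Thus $\norm{B_{Z,X}}_F^2 - 1 = \epsilon^2\sum_x P_X(x)\norm{\psi_x}_2^2$ as an exact identity, and comparing with the mutual-information expansion above closes the proof.

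The main obstacle is really just bookkeeping: verifying that the spherical orthogonality condition on the $\psi_x$'s is precisely what kills the would-be $O(\epsilon)$ contribution to $\norm{B_{Z,X}}_F^2$, so the leading nontrivial order matches the $\epsilon^2$ leading term of $I(X;Z)$ with equal coefficients. Proposition \ref{Prop: Composed DTM} is not needed for this direct route, though one could alternatively derive the $\psi_x$'s from the $\phi_y$'s through $B_{Z,X} = B_{Z,Y}B_{Y,X}$ and obtain the same expansion.
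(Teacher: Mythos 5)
Your proposal is correct and follows essentially the same route as the paper: both expand $I(X;Z)$ as $\sum_x P_X(x) D(P_{Z|X=x}\|P_Z)$, apply \eqref{Eq: Local Approximation of KL Divergence} to get $\tfrac{1}{2}\epsilon^2\sum_x P_X(x)\norm{\psi_x}_2^2 + o(\epsilon^2)$, and identify this sum with $\norm{B_{Z,X}}_F^2 - 1$. The only cosmetic difference is in the last step: the paper recognizes the weighted sum as $\tfrac{1}{2}\norm{B_{Z,X}-\sqrt{P_Z}\sqrt{P_X}^T}_F^2$ and invokes \eqref{Eq: DTM Largest Singular Value}, whereas you expand $\norm{B_{Z,X}}_F^2$ entry-wise and kill the cross term via $\psi_x^T\sqrt{P_Z}=0$ --- the same computation in the opposite direction.
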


\begin{proof}
Observe that:
\begin{align*}
&I(X;Z) = \sum_{x \in \X}{P_X(x) D(P_{Z|X = x} || P_Z)} \\
	&= \frac{1}{2} \epsilon^2 \sum_{x \in \X}{P_X(x) \norm{\psi_x}_2^2} + o\!\left(\epsilon^2\right) \\
	&= \frac{1}{2} \epsilon^2 \sum_{x,z}{P_X(x) \! \left(\frac{P_{Z|X}(z|x) - P_Z(z)}{\epsilon \sqrt{P_Z(z)}}\right)^{\!\! 2}} + o\!\left(\epsilon^2\right) \\
	&= \frac{1}{2} \sum_{x,z}{\left(\frac{P_{Z,X}(z,x) - P_Z(z)P_X(x)}{\sqrt{P_Z(z) P_X(x)}}\right)^{\!\! 2}} + o\!\left(\epsilon^2\right) \\
	&= \frac{1}{2} \norm{B_{Z,X} - \sqrt{P_Z} \sqrt{P_X}^T}_{F}^{2} + o\!\left(\epsilon^2\right) \\
	&= \frac{1}{2} \left(\norm{B_{Z,X}}_{F}^{2} - 1\right) + o\!\left(\epsilon^2\right)
	\end{align*}
where the first equality follows from a straightforward calculation, the second equality follows from \eqref{Eq: Local Approximation Implication} and \eqref{Eq: Local Approximation of KL Divergence}, the fifth equality follows from Definition \ref{Def: DTM}, and the final equality holds due to \eqref{Eq: DTM Largest Singular Value}.
\end{proof}

We will present the Frobenius norm maximization formulation that follows from applying this local approximation result to \eqref{Eq: Information Theoretic Formulation} in section \ref{Maximal Frobenius Norm Coupling}.

\subsection{Connections to Spectral Graph Theory}
\label{Connections to Spectral Graph Theory}
In the case of $\X = \Y$, if we view $P_{Y|X}$ as a matrix of Markov transition probabilities, \eqref{Eq: DTM Definition Conditional} is the matrix being factorized in diffusion maps \cite{CoifmanLafon2006}. If we view $P_{Y,X}$ as a weighted adjacency matrix, \eqref{Eq: DTM Definition} is almost identical to the symmetric normalized graph Laplacian \cite{Chung1997,BelkinNiyogi2001}. Similar to the Laplacian, the DTM carries an important property that we will use later.

\begin{proposition}
\label{Prop: DTM SV}
The multiplicity of the singular value at $1$ of $B(P_{Y,X})$ is equivalent to the number of connected components in a bipartite graph that has weighted adjacency matrix $P_{Y,X}$.
\end{proposition}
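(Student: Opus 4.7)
My plan is to link the DTM's singular-value multiplicity at $1$ to the well-known spectral characterization of connected components for the symmetric normalized Laplacian. First, I would view the weighted bipartite graph as an undirected graph on vertex set $\Y \sqcup \X$ with full adjacency matrix $W = \begin{pmatrix} 0 & P_{Y,X} \\ P_{Y,X}^T & 0 \end{pmatrix}$. Since $P_Y \in \Simp_{\Y}^{\circ}$ and $P_X \in \Simp_{\X}^{\circ}$, every vertex has strictly positive weighted degree, with degree matrix $D = \begin{pmatrix} [P_Y] & 0 \\ 0 & [P_X] \end{pmatrix}$. A direct computation gives the symmetric normalized adjacency
\[ \tilde{W} \triangleq D^{-\frac{1}{2}} W D^{-\frac{1}{2}} = \begin{pmatrix} 0 & B_{Y,X} \\ B_{Y,X}^T & 0 \end{pmatrix}. \]

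Next, I would establish the standard correspondence between the SVD of $B_{Y,X}$ and the eigendecomposition of the block antidiagonal matrix $\tilde{W}$: for each singular triple $(\sigma_i,u_i,v_i)$ of $B_{Y,X}$, the vectors $\tfrac{1}{\sqrt{2}}(u_i^T, \pm v_i^T)^T$ are orthonormal eigenvectors of $\tilde{W}$ with eigenvalues $\pm\sigma_i$, and together with an orthonormal basis of $\ker(B_{Y,X}^T)\oplus \ker(B_{Y,X})$ (contributing only zero eigenvalues) they span $\R^{|\Y|+|\X|}$. Because $1$ is strictly positive, these extra kernel eigenvectors do not interfere, so the multiplicity of $1$ as a singular value of $B_{Y,X}$ equals the multiplicity of $1$ as an eigenvalue of $\tilde{W}$, equivalently the multiplicity of $0$ as an eigenvalue of the symmetric normalized Laplacian $I - \tilde{W}$.

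Finally, I would invoke the classical spectral graph theory result (\cite{Chung1997}) that for a weighted undirected graph with strictly positive vertex degrees, the multiplicity of eigenvalue $0$ of the symmetric normalized Laplacian equals the number of connected components. Chaining these three identifications yields the claim. The main obstacle I anticipate is the bookkeeping in the singular-to-eigen correspondence when $|\X| \neq |\Y|$. A clean alternative route that avoids this issue is to first permute rows and columns so that $P_{Y,X}$ is block diagonal with one block $P_{Y,X}^{(i)}$ per connected component; the marginals respect the partition so $B_{Y,X}$ inherits the same block diagonal structure with blocks equal to the component DTMs $B^{(i)}$; and then applying the Perron-Frobenius theorem to the entry-wise non-negative matrix $(B^{(i)})^T B^{(i)}$ (which is irreducible on a connected component) shows that each block contributes exactly one singular value equal to $1$, so the total multiplicity equals the number of components.
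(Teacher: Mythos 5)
Your proof is correct, and your primary route is essentially the paper's: the paper gives no proof of its own but defers to \cite[Theorem 3.1.1]{Qiu2017}, which makes exactly the identification you describe between the singular values of $B(P_{Y,X})$ and the eigenvalues of $I$ minus the symmetric normalized Laplacian of the bipartite graph, after which Chung's multiplicity-of-zero result finishes the argument. Your second, self-contained route (block-diagonalizing by component and applying Perron--Frobenius to the irreducible nonnegative matrix $(B^{(i)})^T B^{(i)}$ to get a simple top singular value per block) is a valid alternative that sidesteps the eigen--singular bookkeeping for $|\X| \neq |\Y|$ and avoids citing the Laplacian result altogether.
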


For a proof, we refer readers to \cite[Theorem 3.1.1]{Qiu2017}, which  relates the eigenvalues of the identity minus the Laplacian to the singular values of the (corresponding) DTM.

\section{Maximal Frobenius Norm Coupling}
\label{Maximal Frobenius Norm Coupling}

Inspired by Proposition \ref{Prop: Local Mutual Information}, we will learn the $P_{Z|Y} \in \Simp_{\Z|\Y}$ that probabilistically clusters each $y \in \Y$ by maximizing $\norm{B_{Z,X}}_{F}^2$ instead of $I(X;Z)$. This \textit{Frobenius norm formulation of probabilistic clustering} is presented in the next definition.

\begin{definition}[Frobenius Norm Formulation]
	\label{Def: Frobenius Problem}
	Given a joint pmf $P_{Y,X} \in \Simp_{\Y \times \X}$ so that the marginal pmfs satisfy $P_X \in \Simp_{\X}^{\circ}$ and $P_Y \in \Simp_{\Y}^{\circ}$, and a target pmf $P_Z \in \Simp_{\Z}^{\circ}$, we seek to solve the following extremal problem:
	\begin{equation}
	\label{Eq: Frobenius Problem}
	\max_{P_{Z|Y} \in \Simp_{\Z|\Y}: \, P_{Z|Y}P_Y = P_Z} \norm{B_{Z,X}}_{F}^2
	\end{equation}
where $X \rightarrow Y \rightarrow Z$ form a Markov chain. We will refer to an optimal argument $P_{Z|Y}^{\star}$ of this problem, which represents a desirable soft clustering assignment, as a \emph{maximal Frobenius norm coupling}.
\end{definition}

We make some pertinent remarks about Definition \ref{Def: Frobenius Problem}. Firstly, a ``coupling'' of two marginal pmfs $P_Y$ and $P_Z$ is generally defined as a joint pmf $P_{Z,Y}$ that is consistent with these marginals (and often has additional desirable properties)--see e.g. \cite[Section 4.2]{LevinPeresWilmer2009}. However, since the maximizing conditional pmf $P_{Z|Y}^{\star}$ implicitly defines a joint pmf $P_{Z,Y}^{\star} = P_{Z|Y}^{\star} [P_Y]$, we refer to $P_{Z|Y}^{\star}$ itself as a coupling. Secondly, although the Frobenius norm formulation in \eqref{Eq: Frobenius Problem} can be perceived as a local approximation of \eqref{Eq: Information Theoretic Formulation} (which nicely connects the two problems), we will not actually require $P_{Z|Y}^{\star}$ to be close to $P_Z$ as in \eqref{Eq: Soft Clustering Local Approximation} (i.e. weak dependence between $Z$ and $Y$) when using this formulation. Thirdly, the formulation in \eqref{Eq: Frobenius Problem} is intuitively well-founded because \cite{Huangetal2017} and \cite{Makuretal2015} illustrate that the singular values of the DTM $B_{Z,X}$ capture how informative or correlated mutually orthogonal embeddings of $Z$ and $X$ are. Hence, maximizing the sum of all squared singular values maximizes the relevant dependencies between $Z$ and $X$. Naturally, there are various other reasonable formulations of probabilistic clustering that use singular values of the DTM. We present one such class of formulations in \eqref{Eq: General Problem} in the next subsection.   

\subsection{Theoretical Discussion}
\label{Theoretical Discussion}

Consider the following generalization of \eqref{Eq: Frobenius Problem} that also intuitively captures some notion of probabilistic clustering:
\begin{equation}
\label{Eq: General Problem}
\max_{P_{Z|Y} \in \Simp_{\Z|\Y}: \, P_{Z|Y}P_Y = P_Z} \norm{B_{Z,X}}_{p}^p
\end{equation}
where $P_Z \in \Simp_{\Z}^{\circ}$ and $P_{Y,X} \in \Simp_{\Y \times \X}$ are fixed such that $P_X \in \Simp_{\X}^{\circ}$ and $P_Y \in \Simp_{\Y}^{\circ}$. Using Proposition \ref{Prop: Composed DTM}, we may rewrite the objective function of \eqref{Eq: General Problem} as $\norm{B_{Z,X}}_{p}^p = \|[P_Z]^{-\frac{1}{2}}P_{Z|Y}[P_Y]^{\frac{1}{2}} B_{Y,X}\|_{p}^p$. Since the quantity inside the norm is linear in $P_{Z|Y}$, and the $p$th power of a Schatten $\ell^p$-norm is convex, the objective function is convex. Moreover, the constraints on $P_{Z|Y}$ in \eqref{Eq: General Problem} define a compact and convex set in $\R^{|\Z| \times |\Y|}$. (As a result, the maximum in \eqref{Eq: General Problem} can indeed be achieved due to the extreme value theorem.) Hence, \eqref{Eq: General Problem} is a \textit{maximization of a convex function over a convex set}. While convex functions can be easily minimized over convex sets, non-convex problems like \eqref{Eq: General Problem} are often computationally hard (see e.g. \cite{Bodlaenderetal1990}). 

To illustrate this, we consider the notable special case of \eqref{Eq: General Problem} with $p = 2$ which yields the problem in \eqref{Eq: Frobenius Problem}: 
\begin{equation} 
\label{Eq: Frobenius Problem Alternative Form}
\begin{aligned}
\max_{P_{Z|Y} \in \R^{|\Z| \times |\Y|}} \optspace & \norm{B_{Z,Y} B_{Y,X}}_{F}^2 \\
\text{subject to (s.t.)} \optspace & P_{Z|Y}P_Y = P_Z, \, \1_{|\Z|}^T  P_{Z|Y} = \1_{|\Y|}^T,  \\
& P_{Z|Y} \geq 0
\end{aligned}
\end{equation}
where $\1_k \triangleq [1 \cdots 1]^T \in \R^{k}$, the second and third constraints ensure that $P_{Z|Y} \in \Simp_{\Z|\Y}$, and we use Proposition \ref{Prop: Composed DTM} to rewrite the objective function. Letting $A = B_{Z,Y}$ and $B = B_{Y,X}$, we can straightforwardly rewrite this problem as follows:
\begin{equation}
\label{Eq: Frobenius Problem DTM Form}
\begin{aligned}
\max_{A \in \R^{|\Z| \times |\Y|}} \optspace & \norm{A B}_{F}^2 \\
\text{s.t.} \optspace & A \sqrt{P_Y} = \sqrt{P_Z} , \, A^T \sqrt{P_Z} = \sqrt{P_Y} , \\
& A \geq 0 .
\end{aligned}
\end{equation}
This is clearly a \textit{non-convex quadratic program} (QP). Indeed, letting $a = \vectorize(A) \in \R^{|\Z||\Y|}$ (which stacks the columns of $A$ to form a vector), $M_1 = (B \otimes I_{|\Z|}) (B^T \otimes I_{|\Z|})$, $M_2 = \sqrt{P_Y}^T \otimes I_{|\Z|}$, and $M_3 = I_{|\Y|} \otimes \sqrt{P_Z}^T$, the preceding problem is equivalent to:
\begin{equation}
\label{Eq: Non-convex QP}
\begin{aligned}
\max_{a \in \R^{|\Z||\Y|}} \optspace & a^T M_1 a \\
\text{s.t.} \optspace & M_2 \, a = \sqrt{P_Z}, \, M_3 \, a = \sqrt{P_Y}, \, a \geq 0
\end{aligned}
\end{equation}
where $\otimes$ denotes the Kronecker product, and we use the fact that $\vectorize(ABC) = (C^T \otimes A)\vectorize(B)$ for any matrices $A$, $B$, and $C$ with valid dimensions. The QP in \eqref{Eq: Non-convex QP} is non-convex because $M_1$ is positive semidefinite and we are maximizing the associated convex quadratic form. It is proved in \cite{Sahni1974} that such QPs are \textit{NP-hard} (also see \cite{PardalosVavasis1991, Gao2004} and the references therein). Therefore, there are no known efficient algorithms to exactly solve \eqref{Eq: Frobenius Problem}, and we will resort to relaxations and other heuristics in the ensuing sections.

Finally, we provide some brief intuition for the NP-hardness of \eqref{Eq: Frobenius Problem Alternative Form}. The feasible set of \eqref{Eq: Frobenius Problem Alternative Form} is the convex polytope $\Simp_{\Z|\Y} \cap \mathcal{H}$, where $\mathcal{H} \triangleq \{M \in \R^{|\Z| \times |\Y|}: M P_Y = P_Z\}$ is a $|\Z|(|\Y|-1)$-dimensional affine subspace of $\R^{|\Z| \times |\Y|}$. In general, this convex polytope has super-exponentially many extreme points. To see this, consider the special case where $m = |\Y| = |\Z|$ and $P_Y = P_Z$ are the uniform pmf. Then, $\Simp_{\Z|\Y} \cap \mathcal{H}$ is the set of all $m \times m$ doubly stochastic matrices, and its extreme points are the $m!$ different $m \times m$ permutation matrices by the Birkhoff-von Neumann theorem \cite[Theorem 8.7.2]{HornJohnson2013}. For general $|\Y|$, $|\Z|$, $P_Y$, and $P_Z$, the extreme points of $\Simp_{\Z|\Y} \cap \mathcal{H}$ have more complex structure (see e.g. \cite{JurkatRyser1967}, which studies the uniform $P_Y$ and arbitrary $P_Z$ case). When we maximize a convex function over $\Simp_{\Z|\Y} \cap \mathcal{H}$ as in \eqref{Eq: Frobenius Problem Alternative Form}, the optimum is achieved at an extreme point of $\Simp_{\Z|\Y} \cap \mathcal{H}$. So, we have to search over all super-exponentially many extreme points to find this optimal point. This is computationally very inefficient. 

\subsection{Comparison to Formulations that Directly Modify Co-occurrences}

A key feature of our formulation in \eqref{Eq: Frobenius Problem} is that it clusters using a transition kernel $P_{Z|Y}$ and keeps the original data distribution $P_{Y,X}$ intact. For comparison, lets consider a different optimization problem that clusters by modifying the non-negative co-occurrence matrix $P \in \R^{|\Y| \times |\X|}$ directly:
\begin{equation}
\label{Eq: Community}
\min_{\substack{Q \in \R^{|\Y| \times |\X|}: \\ Q \geq 0}}{\norm{Q-P}_F^2-\lambda \sum_{i=1}^{|\Z|} \sigma_i(B(Q))}
\end{equation}
where $\lambda > 0$ is a hyperparameter that should be set high enough to emphasize the second term in the objective function, $B(Q)$ denotes the DTM corresponding to the joint pmf obtained after normalizing $Q$, and $|\Z|$ represents the ideal number of clusters we want (note that the set $\Z$ is inconsequential in this formulation). Because \eqref{Eq: Community} does not learn a transition kernel, in this subsection we do not normalize the data $P$ to be a valid pmf in order to simplify the presentation.

Intuitively, \eqref{Eq: Community} tries to find the closest non-negative matrix $Q$ that has the top $|\Z|$ singular values as $1$ (i.e. has $|\Z|$ connected components--see Proposition \ref{Prop: DTM SV}). This is closely related to the model in \cite{Nieetal2017} and one drawback of this kind of formulation is that it has $|\Y| |\X|$ parameters to learn. Since the number of clusters is typically much smaller than the number of items, i.e. $|\Z| \ll |\Y|$, our formulation in \eqref{Eq: General Problem} has a much lower number $|\Z| |\Y|$ of parameters to learn.

A more important drawback of \eqref{Eq: Community} is that sometimes, the intuitively correct clustering is not the globally optimal solution. We demonstrate this phenomenon via an example. Let $\X = \X_1 \cup \X_2$ for disjoint sets $\X_1$ and $\X_2$, $\Y = \Y_1 \cup \Y_2$ for disjoint sets $\Y_1$ and $\Y_2$, $|\X_1| = |\X_2| = n$, $|\Y_1| = |\Y_2| = m$, and the number of clusters $|\Z| = 2$. Furthermore, let the data matrix $P$ have the following structure: 
\begin{equation}
P= 
\begin{array}{c@{}c}
\left[
\begin{BMAT}[8pt]{c:c}{c:c}
  s\1 & \phantom{\0} \1 \phantom{\0} \\
  \phantom{\0} \1 \phantom{\0} & s\1
\end{BMAT} 
\right] 
& 
\begin{array}{l}
  \\[-4mm] \rdelim\}{2}{0mm}[$m$] \\ \\ \\[-4mm]  \rdelim\}{2}{0mm}[$m$] \\ \\
\end{array} \\[-1.5ex]
\hexbrace{1.0cm}{n}\hexbrace{1.0cm}{n}
\end{array}
\end{equation}
where $\1$ is a matrix of all $1$'s of appropriate dimension, and $s > 1$ is some scale factor. Clearly, there are two distinct communities, and the intuitive result with two clusters is:
\begin{equation}
\label{Eq: Intuitive Cluster}
Q_1= 
\begin{array}{c@{}c}
\left[
\begin{BMAT}[8pt]{c:c}{c:c}
  s\1 & \phantom{\1} \0 \phantom{\1} \\
  \phantom{\1} \0 \phantom{\1} & s\1
\end{BMAT} 
\right] 
& 
\begin{array}{l}
  \\[-4mm] \rdelim\}{2}{0mm}[$m$] \\ \\ \\[-4mm]  \rdelim\}{2}{0mm}[$m$] \\ \\
\end{array} \\[-1.5ex]
\hexbrace{1.0cm}{n}\hexbrace{1.0cm}{n}
\end{array} 
\end{equation}
where $\0$ is a matrix of all $0$'s of appropriate dimension. Since $Q_1$'s structure creates two connected components, $\X_1 \cup \Y_1$ and $\X_2 \cup \Y_2$, the largest two singular values of $B(Q_1)$ are both $1$. Moreover, the objective function has value $2mn-2\lambda$.

\begin{figure}
\centering
\includegraphics[trim = 0mm 0mm 0mm 0mm, width=0.9\linewidth]{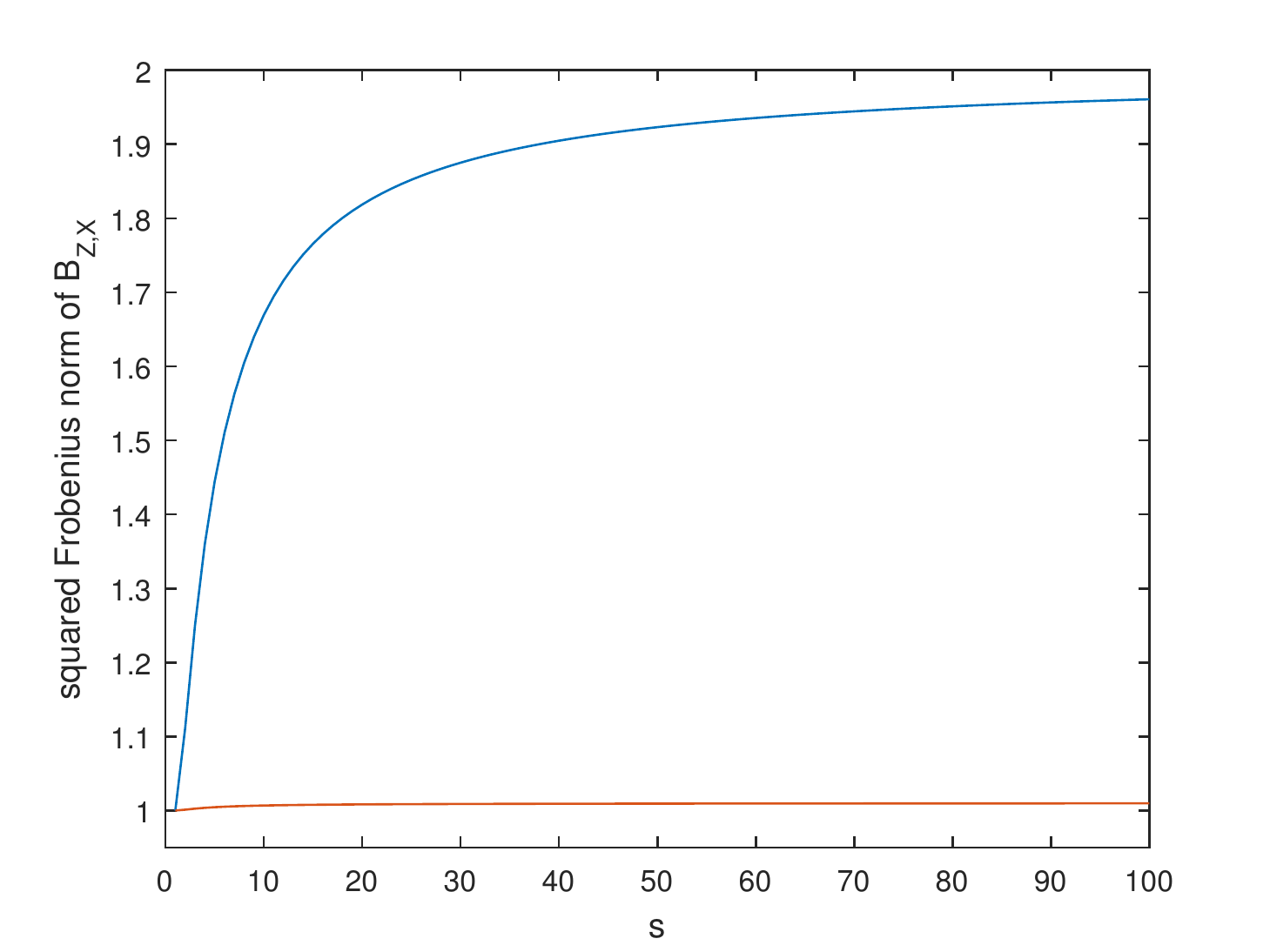}
\caption{Plots of $\norm{B_{Z,X}}_{F}^2$ versus $s \geq 1$ for different transition kernels $P_{Z|Y}$. In particular, the blue plot corresponds to $B_{Z,X}$ defined by $P_{Z|Y}^1$ (the intuitive clustering), and the red plot corresponds to $B_{Z,X}$ defined by $P_{Z|Y}^2$ (the ``one item'' clustering).}
\label{Figure: Frobenius Norm Plot}
\end{figure}

Now consider a different $Q$ that also creates two connected components by only disconnecting one item from $\X$ and one item from $\Y$ from the rest of the items:
\begin{equation} 
\label{Eq: Weird Cluster}
Q_2 = 
\begin{array}{c@{}c}
\left[
\begin{BMAT}[8pt]{ccc:ccc:c}{ccc:ccc:c}
  & & & & & & \\
  & s\1 & & & \1 & & \0 \\
  & & & & & & \\
  & & & & & & \\
  & \1 & & & s\1 & & \0 \\
  & & & & & & \\
  & \0 & & & \0 & & s
\end{BMAT} 
\right] 
& 
\begin{array}{l}
  \\[-2mm] \rdelim\}{4}{5mm}[$m$] \\ \\ \\[6mm]  \rdelim\}{4}{5mm}[$m-1$] \\ \\
  \\[5mm]  \rdelim\}{2}{5mm}[$1$] \\ \\
\end{array} \\[-1ex]
\hexbrace{2.1cm}{n}\hexbrace{2.1cm}{n-1}\hexbrace{0.8cm}{1}
\end{array}
\end{equation}
where the disconnected item forms the bottom $1 \times 1$ block. The largest two singular values of $B(Q_2)$ are still $1$ because of the two connected components. However, the objective function now equals $m + n + s^2 (m+n-2) - 2\lambda$. Thus, when $s < \sqrt{(2mn - m - n)/(m + n - 2)}$, the intuitively correct answer $Q_1$ is not the global optimum of \eqref{Eq: Community}.

In contrast, our maximum Frobenius norm formulation in \eqref{Eq: Frobenius Problem} (without the constraint on the marginal) easily obtains the two intuitive clusters encoded in $P$. For example, let $m = n = 50$, $\Z = \{0,1\}$ denote the cluster labels, and consider the transition kernels $P_{Z|Y}^1$ corresponding to the intuitive clustering shown in \eqref{Eq: Intuitive Cluster} (defined by $P^1_{Z|Y}(0|y) = 1$ for $y \in \Y_1$ and $P^1_{Z|Y}(1|y) = 1$ for $y \in \Y_2$), and $P_{Z|Y}^2$ corresponding to the clustering shown in \eqref{Eq: Weird Cluster} (defined by $P^2_{Z|Y}(0|y) = 1$ for $y \neq y_0$ and $P^2_{Z|Y}(1|y_0) = 1$ for some $y_0 \in \Y_2$). Then, the plots in Figure \ref{Figure: Frobenius Norm Plot} illustrate that the intuitive clustering of $P_{Z|Y}^1$ is greatly preferred by the maximum Frobenius norm formulation. Therefore, our formulation does not exhibit the drawbacks of formulations like \eqref{Eq: Community}.

\section{Optimization Algorithms}
\label{Optimization Algorithms}

To solve the non-convex QP given by the Frobenius norm formulation of probabilistic clustering in \eqref{Eq: Frobenius Problem}, we will use a heuristic gradient ascent algorithm (subsection \ref{Heuristic Gradient Ascent Algorithm}) as well as a nuclear norm relaxation (subsection \ref{Nuclear Norm Relaxation}). Although one approach to finding approximate solutions to an NP-hard problem like \eqref{Eq: Frobenius Problem} is via semidefinite programming (SDP) relaxations, we do not explore SDP based algorithms in this paper. Moreover, many of the simpler SDP relaxations for non-convex QPs do not accurately capture our setting because they only appear to be tight when at least one of the constraints is also quadratic \cite{BaoSahinidisTawarmalani2011}.

\subsection{Heuristic Gradient Ascent Algorithm}
\label{Heuristic Gradient Ascent Algorithm}


We now present a gradient-based algorithm for approximating the maximal Frobenius norm coupling defined by the formulation of probabilistic clustering in \eqref{Eq: Frobenius Problem}, or equivalently, in \eqref{Eq: Frobenius Problem DTM Form}. For computational efficiency, we move the first constraint in \eqref{Eq: Frobenius Problem DTM Form} to the objective function to obtain: 
\begin{equation}
\label{Eq: Relaxed Optimization}
\begin{aligned}
\max_{A \in \R^{|\Z| \times |\Y|}} \optspace & \norm{A B}_{F}^2-\lambda \norm{A \sqrt{P_Y}-\sqrt{P_Z}}_2^2 \\
\text{s.t.} \optspace & A^T \sqrt{P_Z} = \sqrt{P_Y}, \, A \geq 0
\end{aligned}
\end{equation}
where $\lambda > 0$ is a hyperparameter that controls how strictly the $A \sqrt{P_Y}=\sqrt{P_Z}$ constraint is imposed. In other words, the solution no longer has to induce clusters with exactly $P_Z$ as their marginal pmf, but it incurs a penalty proportional to the squared $\ell^2$-norm of the difference $A \sqrt{P_Y} - \sqrt{P_Z}$. Note that any other differentiable distance between distributions can be substituted here.  

The gradients of the components in the objective function of \eqref{Eq: Relaxed Optimization} are:
\begin{align}
\frac{\partial}{\partial A} \norm{A B}_{F}^2 &= \frac{\partial}{\partial A} \tr\!\left(ABB^T A^T\right) = 2 A B B^T \\
\frac{\partial}{\partial A} \norm{A v - w}_2^2 &= 2 \left( A v v^T - w v^T \right)
\end{align}
where $v = \sqrt{P_Y}$, $w = \sqrt{P_Z}$, and we use denominator layout notation (or Hessian formulation).

Furthermore, since there is an equivalence between \eqref{Eq: Frobenius Problem Alternative Form} and \eqref{Eq: Frobenius Problem DTM Form}, the remaining constraints in \eqref{Eq: Relaxed Optimization} correspond exactly to the second and third constraints in \eqref{Eq: Frobenius Problem Alternative Form} which are just enforcing $P_{Z|Y}$ to be a valid column stochastic matrix. Thus, we can either use any existing algorithms (e.g. \cite{Duchietal2008, ChenYe2011}) for projection back onto the simplex and apply them column-wise to $P_{Z|Y}$ or revise them to operate on $A$ directly. Algorithm \ref{Algorithm: SGD} describes the entire optimization procedure for problem \eqref{Eq: Relaxed Optimization}.

\begin{algorithm}
	\caption{Gradient Ascent Algorithm for Frobenius Norm Formulation}
	\label{Algorithm: SGD}
	\begin{flushleft}
		\textbf{Input:} Joint distribution $P_{Y,X}$, target marginal $P_Z$, marginal penalty multiplier $\lambda > 0$,  step size $\alpha > 0$ \\
		\textbf{Output:} Soft clusters induced by $P_{Z|Y}$
	\end{flushleft}
	
	\begin{algorithmic}[1] 
		
		\STATE Initialize $A_0 \in \R^{|\Z| \times |\Y|}$ to be an entry-wise positive matrix
		\STATE $B \gets [P_Y]^{-\frac{1}{2}}P_{Y,X}[P_X]^{-\frac{1}{2}}$
		\STATE $M_1 \gets BB^T$
		\STATE $M_2 \gets \lambda \sqrt{P_Y} \sqrt{P_Y}^T$
		\STATE $M_3 \gets \lambda \sqrt{P_Z} \sqrt{P_Y}^T$
		
		\WHILE{$A_t$ not converged}
		\STATE $A_t \gets A_{t-1}\!\left(I_{|\Y|} + \alpha (M_1-M_2)\right)+\alpha M_3$
		\IF{$A_t$ violates constraint above tolerance}
		\STATE $A_t \gets$ proj$(A_t)$ 
		\ENDIF
		\ENDWHILE
		\STATE \textbf{return} $P_{Z|Y} \gets [P_Z]^{\frac{1}{2}}A_t[P_Y]^{-\frac{1}{2}}$
	\end{algorithmic}
\end{algorithm}

\subsection{Nuclear Norm Relaxation}
\label{Nuclear Norm Relaxation}

Let us consider a modified problem where we approximate the Frobenius norm in \eqref{Eq: Frobenius Problem} using a nuclear norm. This yields the problem in \eqref{Eq: General Problem} specialized to the $p = 1$ case. We further relax this problem by completely disregarding the constraint on the marginal to obtain:
\begin{equation}
\label{Eq: Nuclear Norm}
\max_{P_{Z|Y} \in \Simp_{\Z|\Y}}{\norm{B_{Z,X}}_{*}} 
\end{equation}
which defines a ``maximal nuclear norm coupling'' representing a desirable clustering assignment. To derive some intuition about this problem, we recall a well-known result from the literature. For any fixed channel $P_{Z|X} \in \Simp_{\Z|\X}$, the second largest singular value $\sigma_2(B_{Z,X})$ of $B_{Z,X}$ is the \textit{Hirschfeld-Gebelein-R\'{e}nyi maximal correlation} between $Z$ and $X$, which is given by:
\begin{align}
\sigma_2(B_{Z,X}) & = \max_{\substack{f:\Z \rightarrow \R, \, g:\X \rightarrow \R \, : \\\E[f(Z)] = \E[g(X)] = 0\\\E\!\left[f(Z)^2\right] = \E\!\left[g(X)^2\right] = 1}}{\E[f(Z)g(X)]} \\
& = \max_{\substack{f \in \R^{|\Z|}, \, g \in \R^{|\X|}: \\f^T P_Z = g^T P_X = 0\\ f^T [P_Z] f = g^T [P_X] g = 1}}{f^T P_{Z, X} g}
\label{Eq: Maximal Correlation}
\end{align}
where the equality can be easily justified using the \textit{Courant-Fischer variational characterization} of singular values (cf. \cite{Renyi1959}, \cite[Definition 3, Proposition 2]{MakurZheng2018}, and the references therein). In particular, the optimal $f^{\star}$ and $g^{\star}$ can be obtained in terms of singular vectors of $B_{Z,X}$ corresponding to the singular value $\sigma_2(B_{Z,X})$, and they serve as useful \textit{features} that capture the maximal correlation between $Z$ and $X$ \cite{Makuretal2015, Qiu2017}. From this perspective, \eqref{Eq: Nuclear Norm} maximizes the statistical dependence between $Z$ and $X$ as measured by the sum of maximal correlations (or singular values) subject to the Markov constraint $X \rightarrow Y \rightarrow Z$ for the purposes of probabilistic clustering.

To derive an algorithm for \eqref{Eq: Nuclear Norm} that also uses SVD structure, we consider a generalization of \eqref{Eq: Maximal Correlation}. Using \textit{Ky Fan's extremum principle}, cf. \cite[Theorem 3.4.1]{HornJohnson1991}, we obtain the relation: 
\begin{equation}
\label{Eq: Ky Fan Extremum Principle}
\norm{B_{Z,X}}_{*} = \max_{\substack{F \in \R^{|\Z| \times r}, \, G \in \R^{|\X| \times r}: \\ F^T [P_Z] F = G^T [P_X] G = I_{r}}}{\tr\!\left(F^T P_{Z, X} G\right)} 
\end{equation}
where $r = \min(|\X|,|\Z|)$. The proof of \cite[Theorem 3.4.1]{HornJohnson1991} also shows that the optimal solutions of \eqref{Eq: Ky Fan Extremum Principle} are:
\begin{equation}
\label{Eq: ACE}
F^{\star} = [P_Z]^{-\frac{1}{2}} U \quad \text{and} \quad G^{\star} = [P_X]^{-\frac{1}{2}} V
\end{equation}
where $U \in \R^{|\Z| \times r}$ and $V \in \R^{|\X| \times r}$ are matrices with orthonormal columns that correspond to the left and right singular vector bases of the DTM $B_{Z,X}$, respectively. Thus, since $P_{Z,X} = P_{Z|Y} P_{Y,X}$ by the Markov property, we can rewrite \eqref{Eq: Nuclear Norm} as:
\begin{equation}
\label{Eq: Nuclear Norm Relaxation}
\max_{\substack{P_{Z|Y} \in \Simp_{\Z|\Y}, \\
F \in \R^{|\Z| \times r}, \, G \in \R^{|\X| \times r}: \\ F^T [P_Z] F = G^T [P_X] G = I_r}}{\tr\!\left(F^T P_{Z|Y} P_{Y,X} G\right)} .
\end{equation}
Inspired by \cite{Nieetal2017}, we also use \textit{alternating maximization} to solve this problem. With $P_{Z|Y}$ fixed, the optimal $F$ and $G$ are given by \eqref{Eq: ACE}. With $F$ and $G$ fixed, the objective function in \eqref{Eq: Nuclear Norm Relaxation} is linear in the entries of $P_{Z|Y}$ and can be solved using any \textit{linear programming} (LP) packages. Algorithm \ref{Algorithm: Alternating Maximization} describes the entire optimization procedure. 

\begin{algorithm}
	\caption{Alternating Maximization Algorithm for Nuclear Norm Formulation}
	\label{Algorithm: Alternating Maximization}
	\begin{flushleft}
		\textbf{Input:} Joint distribution $P_{Y,X}$ \\
		\textbf{Output:} Clusters induced by $P_{Z|Y}$
	\end{flushleft}
	
	\begin{algorithmic}[1] 
		
		\STATE Initialize $P_{Z|Y}$ to be a $|\Z| \times |\Y|$ column stochastic matrix
        \STATE $P_X \gets \1_{|\Y|}^T P_{Y,X}$
        \WHILE{$P_{Z|Y}$ not converged}
        \STATE $P_{Z,X} \gets P_{Z|Y} P_{Y,X}$
        \STATE $P_Z \gets P_{Z,X} \1_{|\X|}$
		\STATE $B \gets [P_Z]^{-\frac{1}{2}}P_{Z,X}[P_X]^{-\frac{1}{2}}$
		\STATE $U, \Sigma, V \gets \text{SVD}(B)$
        \STATE $F \gets [P_Z]^{-\frac{1}{2}} U$
		\STATE $G \gets [P_X]^{-\frac{1}{2}} V$
		\STATE $P_{Z|Y} \gets \argmax_{P_{Z|Y} \in \Simp_{\Z|\Y}} \tr\!\left(F^T P_{Z|Y} P_{Y,X} G\right)$
		\ENDWHILE
		\STATE \textbf{return} $P_{Z|Y}$
	\end{algorithmic}
\end{algorithm}

We remark that this algorithm does not require any prior knowledge of $P_Z$. This is one potential advantage of the relaxed nuclear norm formulation in \eqref{Eq: Nuclear Norm} over the original Frobenius norm formulation in \eqref{Eq: Frobenius Problem}. On the other hand, problem \eqref{Eq: Nuclear Norm Relaxation} has the uncommon feature that the constraint on $F$ depends on $P_{Z|Y}$ (or more precisely, on $P_Z$, which is derived from $P_{Z|Y}$). In typical instances of alternating maximization problems, the feasible sets of the variables (over which we alternate) are ``independent'' of each other (see e.g. \cite{CsiszarTusnady1984}). One way to ``decouple'' the feasible set of $F$ from $P_{Z|Y}$ is to fix some $P_Z$ (when we have prior knowledge). This imposes an additional linear constraint on $P_{Z|Y}$ which is easily handled by an LP. In our experiments, we do not impose this additional constraint because Algorithm \ref{Algorithm: Alternating Maximization} converges to a reasonable solution without the constraint.

\section{Experiments}
\label{Experiments}

\subsection{Word Embedding for MSR Sentence Completion Challenge}
\begin{table}[t]
	\caption{Performance comparison with other single architecture techniques as reported in~\cite{Mikolovetal2013}.}
	\label{Table: Word Embedding}
	\begin{center}
		\begin{small}
			\begin{sc}
				\begin{tabular}{cc}
					\toprule
					Architecture & Accuracy \\
					\midrule
					4-gram & $39\%$ \\
					Average LSA similarity & $49.6\%$ \\
					Log-bilinear model & $54.8\%$ \\
					RNNLMs & $55.4\%$ \\
					Skip-gram & $48.0\%$ \\
					\textbf{Our model} & $\mathbf{53.94\%}$ \\
					\bottomrule
				\end{tabular}
			\end{sc}
		\end{small}
	\end{center}
	\vskip -0.1in
\end{table}	

Though this paper is about clustering, we first want to validate that the DTM is an informative matrix for large scale unsupervised learning. To do this, we use it to learn word embeddings for the MSR Sentence Completion Challenge \cite{ZweigBurges2011}. The dataset consists of a training corpus of raw text taken from classic English literature and 1040 Scholastic Aptitude Test (SAT) style sentence completion questions.

\begin{table*}[t]
	\caption{Examples from the top $100$ most rated movies divided into the clusters found by Algorithm \ref{Algorithm: Alternating Maximization}. Note that cluster 2 is empty because it only contains movies outside the top 100 most rated movies.}
	\label{Table: Movie Clustering}
	\begin{center}
		\begin{small}
			\begin{sc}
				\begin{tabular}{ccccc}
					\toprule
                    Cluster 1 & Cluster 2 & Cluster 3 & Cluster 4 & Cluster 5 \\
					\midrule
                    Raiders of the Lost Ark & N/A & The Terminator & Star Wars & Contact \\
                    The Godfather & N/A & Terminator 2 & Return of the Jedi & Liar Liar \\
                    Pulp Fiction & N/A & Braveheart & Fargo & The English Patient \\
                    Silence of the Lambs & N/A & The Fugitive & Toy Story & Scream \\
					\bottomrule
				\end{tabular}
			\end{sc}
		\end{small}
	\end{center}
	\vskip -0.1in
\end{table*}

Let $P_{Y,X}$ be the normalized word-word co-occurrence matrix and let $U \Sigma V^T \approx [P_Y]^{-\frac{1}{2}} P_{Y,X} [P_X]^{-\frac{1}{2}}$ be the $640$-dimensional truncated SVD of the DTM. We use the \textit{alternating conditional expectations} (ACE) algorithm \cite{BreimanFriedman1985,Makuretal2015} to approximate $[P_Y]^{-\frac{1}{2}} U$, and use that as the word embedding.

We use various functions of cosine similarity between the candidate word and the surrounding words to select the most probable answer. Table \ref{Table: Word Embedding} shows that our method is competitive with popular single architecture word embedding techniques. This is not entirely surprising as there are other papers such as \cite{PenningtonSocherManning2014}, \cite{LevyGoldberg2014}, and \cite{StratosCollinsHsu2015} that advocate approximately factorizing various versions of the co-occurrence matrix. However, it provides empirical evidence that our method is valid and worth investigating more (on embedding as well as clustering).

\subsection{MovieLens 100K}

For qualitative validation, we use Algorithm \ref{Algorithm: Alternating Maximization} to find $5$ clusters using the MovieLens 100K dataset. The data is in the form of a movie-user rating matrix, where each entry can be blank to denote unrated, or in the range $\{1,\dots,5\}$. This is conceptually different from a co-occurrence matrix since a $5$-rated movie does not mean a user watched that movie $5$ times more frequently compared to a $1$-rated movie.

For preprocessing, we replace all blank entries with $0$ to denote no co-occurrence. We assume each unit increment in rating corresponds to tripling of a user's affinity toward a movie. Thus, we map each valid rating using the function $r \mapsto 3^{r-1}-1$. Then, we row normalize such that each row (corresponding to one movie) sums to $1$.

From Table \ref{Table: Movie Clustering}, we can see an approximate division of genres among clusters $1$, $3$, $4$, and $5$. Cluster $2$ captures many of the less popular movies and does not contain any one from the set of $100$ movies with the most ratings. Since MovieLens 100K does not contain ground truth cluster labels, we do not experiment further beyond this qualitative example.

\subsection{Reuters21578}
\begin{table}[t]
	\caption{Clustering accuracy on Reuters21578 for Algorithm \ref{Algorithm: Alternating Maximization}. The nuclear norm increases more slowly when $k \geq 8$, which implies that $k = 8$ or $10$ is the ``right'' number of clusters.}
	\label{Table: Reuters}
	\begin{center}
		\begin{small}
			\begin{sc}
				\begin{tabular}{ccccc}
					\toprule
					$k$ & Coverage & Overall acc. & $k$-acc. & $\norm{\cdot}_*$ \\
					\midrule
					$2$ & $69.55\%$ & $65.15\%$ & $93.67\%$ & $1.71$ \\
                    $3$ & $73.42\%$ & $65.51\%$ & $89.22\%$ & $2.33$ \\
                    $4$ & $77.01\%$ & $62.25\%$ & $80.83\%$ & $2.85$ \\
                    $6$ & $82.35\%$ & $57.43\%$ & $69.74\%$ & $3.72$ \\
                    $8$ & $85.43\%$ & $54.11\%$ & $63.34\%$ & $4.49$ \\
                    $10$ & $87.85\%$ & $48.52\%$ & $55.23\%$ & $5.14$ \\
					\bottomrule
				\end{tabular}
			\end{sc}
		\end{small}
	\end{center}
	\vskip -0.1in
\end{table}

\begin{table}[t]
	\caption{Clustering accuracy on Reuters21578 for Algorithm \ref{Algorithm: SGD}. Knowing the true cluster marginal helps maintain accuracy as $k$ increases.}
	\label{Table: Reuters SGD}
	\begin{center}
		\begin{small}
			\begin{sc}
				\begin{tabular}{ccccc}
					\toprule
					$k$ & Coverage & Overall acc. & $k$-acc. & $\norm{\cdot}_F$ \\
					\midrule
					$2$ & $69.55\%$ & $47.86\%$ & $68.81\%$ & $1.19$ \\
					$3$ & $73.42\%$ & $59.60\%$ & $81.18\%$ & $1.30$ \\
					$4$ & $77.01\%$ & $68.64\%$ & $89.12\%$ & $1.38$ \\
					$6$ & $82.35\%$ & $67.70\%$ & $82.21\%$ & $1.48$ \\
					$8$ & $85.43\%$ & $69.12\%$ & $80.90\%$ & $1.51$ \\
					$10$ & $87.85\%$ & $70.73\%$ & $80.52\%$ & $1.59$ \\
					\bottomrule
				\end{tabular}
			\end{sc}
		\end{small}
	\end{center}
	\vskip -0.1in
\end{table}

The Reuters21578 dataset contains $8293$ documents and their frequencies on $18933$ terms. Although the ground truth shows $65$ topic clusters, the largest $10$ clusters include $87.9\%$ of all documents while the smallest $8$ clusters each has $1$ document. Thus, we argue that a good algorithm needs to provide a metric to infer a meaningful number of clusters. 

For this experiment, we do not perform any data preprocessing and classify all documents into $k \in \{2,3,4,6,8,10\}$ clusters. Because we do not have clusters devoted to the $65-k$ smallest clusters, in Table \ref{Table: Reuters}, we report the classification accuracy in two ways. Overall accuracy counts all documents from those smallest clusters as incorrectly classified, and $k$-accuracy disregards those documents and only reports accuracy of documents from the top $k$ clusters. In both of these cases, the extra documents from the smallest clusters are still present in the data, acting as noise.

Similar to spectral clustering \cite{NgJordanWeiss2001}, we can plot the norm given by Algorithm \ref{Algorithm: Alternating Maximization} against $k$ to identify the $k$ that strikes a balance between document coverage and classification accuracy. At the cost of disregarding the smallest clusters, we achieve improved overall accuracy compared to the best algorithm ($43.94\%$) reported in \cite[Table 2]{Nieetal2017}.

Alternatively, assuming we have access to the ground truth cluster marginal pmf, we can use Algorithm \ref{Algorithm: SGD}. Table \ref{Table: Reuters SGD} shows that this prior information offers significant improvements in accuracy as $k$ gets large.

\section{Conclusion and Future Work}
\label{Conclusion}

In this paper, we reviewed the mutual information formulation for probabilistic clustering \eqref{Eq: Information Theoretic Formulation}. Then, to convert \eqref{Eq: Information Theoretic Formulation} into a matrix optimization \eqref{Eq: Frobenius Problem}, we locally approximated mutual information as the Frobenius norm of the DTM in Proposition \ref{Prop: Local Mutual Information}. This allowed us to explicitly learn a maximal matrix norm coupling $P_{Z|Y}$ for clustering as opposed to the standard procedure (embedding and $k$-means). Learning $P_{Z|Y}$ also lets us encode prior information. We saw one example of this with the predefined $P_Z$ in \eqref{Eq: Frobenius Problem}. We can also add constraints that fix certain columns of $P_{Z|Y}$ if a subset of the data is labeled to perform semi-supervised learning.

There are two aspects of our approach that can be improved in future. Firstly, we can implement more efficient non-convex optimization algorithms that converge to solutions closer to the global optimum. Secondly, we can improve our model's robustness to noise. Currently, we treat the observed noisy co-occurrence matrix as a good estimate of the true distribution while \textit{matrix factorization} (MF) approaches treat the noise as entry-wise Gaussian perturbations of a low rank model \cite{SalakhutdinovMnih2007}. In our experience, MF tends to perform well on data with high entry-wise noise while our approach performs well on data with complex community structures and lower noise.

Another future direction is to probabilistically cluster $X$ in addition to $Y$. The optimization problem for this is:
\begin{equation}
\begin{aligned}
\max_{\substack{A \in \R^{|\Z| \times |\Y|} , \\ C \in \R^{|\W| \times |\X|}}} \optspace & \norm{A B C^T}_{F}^2 \\
\text{s.t.} \optspace & A \sqrt{P_Y} = \sqrt{P_Z} , \, A^T \sqrt{P_Z} = \sqrt{P_Y}, \\
& C \sqrt{P_X} = \sqrt{P_W}, \, C^T \sqrt{P_W} = \sqrt{P_X}, \\
& A \geq 0, \, C \geq 0 .
\end{aligned}
\end{equation}
where $C$ obtains the clusters of $X$, cf. \eqref{Eq: Frobenius Problem DTM Form}. This parallels the notion of \textit{co-clustering} in the literature \cite{DhillonMallelaModha2003}, and is a topic worthy of further investigation.

\balance
\bibliographystyle{IEEEtran}
\bibliography{ClusteringRefsFinal}

\end{document}